\let\NAT@parse\undefined
\definecolor{mygreen}{rgb}{0,0.6,0}
\definecolor{mypurple}{rgb}{0.6,0,0.6}
\definecolor{myblue}{rgb}{0,0,0.9}
\newtheorem{theorem}{Theorem}[section]
\NewDocumentCommand \proposition {g g g g} {\texttt{#1}(#2
  \IfValueTF{#3}{,\,#3}{}
  \IfValueTF{#4}{,\,#4}{}
  )
}
\newcommand{\pddl}[1]{\lstinline[basicstyle=\small\ttfamily,breaklines=true,commentstyle={}]{#1}}
\lstdefinelanguage{pddlaction}{
    deletecomment=[l]{;}
}
\newcommand{\pddlaction}[1]{\lstinline[language=pddlaction,basicstyle=\small\ttfamily,breaklines=true]{#1}}
\NewDocumentCommand \actioncall {g g g g} {\text{#1}(
  \IfValueTF{#2}{#2}{}
  \IfValueTF{#3}{,#3}{}
  \IfValueTF{#4}{,#4}{}
  )
}
\author{Brandon Vu, Toki Migimatsu, Jeannette Bohg
\thanks{The authors are with the Department of Computer Science, Stanford University (e-mail: \{vubt, takatoki, bohg\}@cs.stanford.edu).}
% \thanks{The authors are with Stanford University, Stanford, CA 94305, USA.}%
\thanks{Toyota Research Institute provided funds to support this work.}
}
\begin{document}
\title{COAST:\\ Task and Motion Planning with PDDL \textbf{CO}nstraints \textbf{A}nd \textbf{ST}reams}
\title{\textbf{COAST}: \textbf{CO}nstraints \textbf{A}nd \textbf{ST}reams for Task and Motion Planning}
% \title{COAST Supplemental Materials}
% \title{TOAST:\\ \textbf{T}ask and Motion Planning with PDDL C\textbf{O}nstraints \textbf{A}nd \textbf{ST}reams}
% \title{DTMP:\\ Direct Task and Motion Planning with PDDL Constraints and Streams}
% \title{Unifying Actions and Motion:\\ Task and Motion Planning with PDDL Constraints and Streams}
% \title{{\bf G-PDDL\/}: Geometric PDDL for Sampling-Based Task and Motion Planning}
%\title{Decentralized Symbolic Reasoning over Geometric States for Task and Motion Planning}
% \title{Symbolic Reasoning over Geometric States for Planning in High Task-Motion Complexity}
% \title{Distributing and Decentralizing Geometric Information for Task and Motion Planning to Enable Complex Task-Motion Interactions}
\maketitle

% \showthe\columnwidth
% 252pt = 3.5in

% \showthe\textwidth
% 516pt = 7.16in

% \showthe\textheight
% 672pt = 9.33in

\begin{abstract}
Task and Motion Planning (TAMP) algorithms solve long-horizon robotics tasks by integrating task planning with motion planning; the task planner proposes a sequence of actions towards a goal state and the motion planner verifies whether this action sequence is geometrically feasible for the robot. However, state-of-the-art TAMP algorithms do not scale well with the difficulty of the task and require an impractical amount of time to solve relatively small problems. We propose Constraints and Streams for Task and Motion Planning (COAST), a probabilistically-complete, sampling-based TAMP algorithm that combines stream-based motion planning with an efficient, constrained task planning strategy. We validate COAST on three challenging TAMP domains and demonstrate that our method outperforms baselines in terms of cumulative task planning time by an order of magnitude. You can find more supplementary materials on our project \href{https://branvu.github.io/coast.github.io}{website}.
\end{abstract}

%\begin{IEEEkeywords}
%perception for grasping and manipulation, task planning, symbolic grounding, visual relationship detection
%\end{IEEEkeywords}

\section{Introduction}
\label{sec:intro}

We aim to equip a robot with the ability to solve complex long-horizon tasks that require a combination of symbolic and geometric reasoning. {\em Task and Motion Planning\/} (TAMP) is an approach for solving such tasks. TAMP methods often use task planning to produce a sequence of symbolic actions, i.e. a task plan, in addition to using sampling-based motion planning to ensure the task plan is geometrically feasible. If the task plan is geometrically infeasible, then this result needs to be communicated to the task planner for replanning. Two main paradigms of communication exist: \textit{sample-first} and \textit{plan-first} \cite{garrett2021integrated}. \textit{Sample-first} methods perform motion sampling first (without any task plan) and then query task planning to sequence only the geometrically feasible samples \cite{factor, pddlstream}. \textit{Plan-first} methods perform task planning first and then refine the task plans with motion sampling, where sampling failures due to geometric infeasibility are translated into task planning constraints for the next iteration of task planning \cite{idtmp, tmit, neuro}.
% Positive paradigms repeatedly find solutions to constraints and then use those solutions to solve for a refined task plan, repeating this process until a plan is found \cite{pddlstream, factor}. In contrast, negative paradigms plan first and then refine, where plans that fail to be refined are used as counterexamples for the next iteration of task planning \cite{idtmp, tmit, neuro}. 
% Task planning and motion planning alone are each PSPACE-hard \cite{bylander1994computational,reif1979complexity}, which makes runtime performance a critical bottleneck of both TAMP paradigms.
% Note we also include planning non-motion action parameters like grasps, poses, etc under motion planning refinement.

Two sampling-based TAMP algorithms closely related to our work are PDDLStream \cite{pddlstream} and Iteratively Deepened Task and Motion Planning (IDTMP) \cite{idtmp}. PDDLStream is an optimistic \textit{sample-first} method that breaks down motion planning into black-box sampling functions called streams and integrates them into the task-planning problem as objects and action preconditions in the {\em Planning Domain Definition Language\/} (PDDL) \cite{pddl}. A limitation of PDDLStream is that it must generate the symbolic objects to be used for task planning without knowing which ones may be necessary. But generating too many objects results in exponential task planning times. IDTMP is a \textit{plan-first} method that treats task planning as Constraint Satisfaction Problems (CSPs) and uses constraints to communicate motion planning failures. A limitation of IDTMP is that it requires a discretization of the workspace which prevents IDTMP from applying to domains with large workspaces like our \textit{Kitchen} and \textit{Rover} domains.   
% CSP solvers assume a fixed task plan length, so framing task planning as CSPs requires iteratively planning on increasing task plan lengths; this iteration also leads to exponential planning times. %Sample-first methods like PDDLStream struggle with exponential task planning time when many symbolic objects are generated before task planning. Similarly, plan-first methods like IDTMP rely on incrementally deepening task planning to gradually constrain task and motion planning, which finds the shortest feasible plan, but often requires more time to iterate through all possible shorter plans.
%The incremental components of these algorithms impede scalability when solving complex problems with many movable objects and geometric constraints. 
% Both of these state-of-the-art methods suffer from scalability issues due to inefficiencies in their approach and are unable to quickly solve complex problems with many objects, actions, and geometric constraints.

We propose a probabilistically-complete, \textit{plan-first} TAMP algorithm that is significantly faster than PDDLStream and IDTMP. This speedup occurs by using a direct stream planning algorithm to create stream objects \textit{after} task planning rather than \textit{before} to avoid the computational cost of task planning with many unnecessary stream objects. 
% Second, we implement constraints in PDDL, which allows us to use fast off-the-shelf PDDL solvers \cite{helmert2006fast} to inform the task planner of stream-based motion planning failures. 
% At the core of our method is a reformulation of PDDL task planning and stream-based motion planning. Stream planning directly decides what motion planning streams are required to satisfy a PDDL task plan without any incremental algorithm  ---hence the name {\em Direct Task and Motion Planning\/}, or DTMP. 
We validate our method on three TAMP domains (Fig.~\ref{fig:teaser}), each one challenging in different ways, and demonstrate that our method outperforms both PDDLStream and IDTMP by an order of magnitude in task planning time. 
%Our method follows a plan-first paradigm, and we compare across baselines of both paradigms.

\begin{figure}
    \centering
    % \includegraphics[width=0.32\columnwidth]{} 
    % % \hspace{30px}
    % \includegraphics[width=0.32\columnwidth]{}
    % % \hspace{30px}
    \includegraphics[width=\columnwidth]{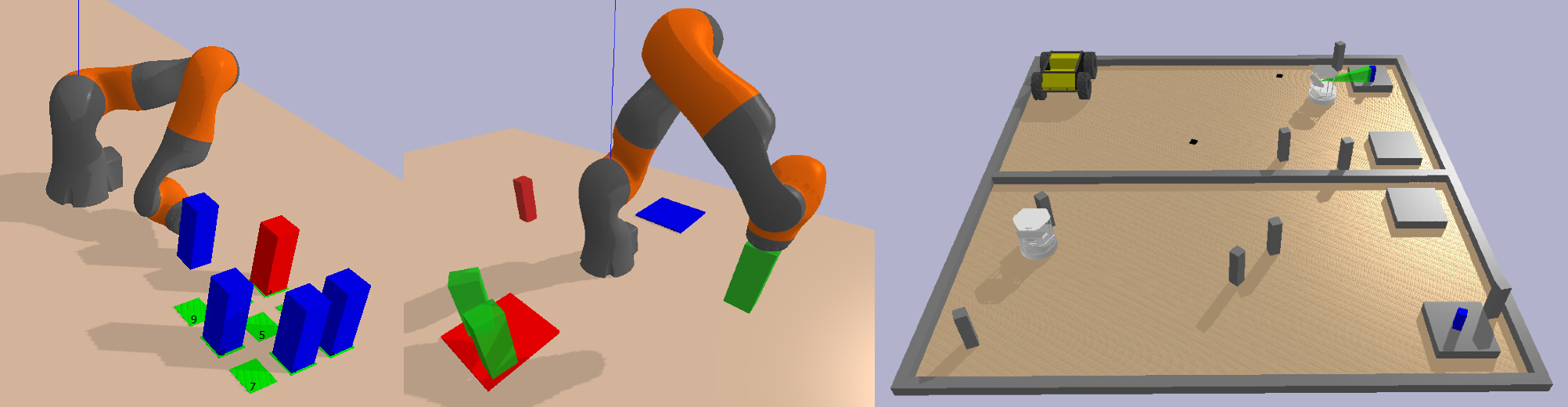}
    \caption{We propose COAST, a sampling-based TAMP algorithm that is able to solve complex, geometrically constrained, long-horizon planning problems faster than prior state-of-the-art. 
    % We evaluate our algorithm on three domains (\textit{Blocks}, \textit{Kitchen}, \textit{Rover}) that test a range of difficulty, including large task spaces (\textit{Kitchen}, \textit{Rover}), non-monotonicity (\textit{Block}, \textit{Kitchen}), task-motion complexity (\textit{Block}, \textit{Rover}), infeasible actions (\textit{Block}, \textit{Rover}), and non-geometric actions (\textit{Kitchen}). We refer readers to the TAMP Benchmark~\cite{benchmark} for in-depth explanations of these criteria and the supplementary video for examples of the domains. 
    We demonstrate the ability of our algorithm to solve problems from three domains: a $3 \times 3$ grid rearranging task (\textit{Blocks} \cite{idtmp}, \textit{left}), a constrained pick-and-place kitchen task (\textit{Kitchen} \cite{pddlstream}, \textit{middle}), a rover surveillance task with obstacles (\textit{Rover} \cite{pddlstream}, \textit{right}).
    } 
    \label{fig:teaser}
\end{figure}

%We propose this algorithm in a general and domain-agnostic framework that focuses on solving complex task-motion problems.

\section{Related Works}
\label{sec:related}
TAMP problems are challenging due to the need to search over both discrete and continuous spaces. Thus, many works propose different techniques to reduce the search complexity of TAMP. Hierarchical Task Networks (HTN)~\cite{htn} are a class of TAMP algorithms that use expert-designed hierarchies to reduce the dimension of the search problem~\cite{hatp}. Other works \cite{Backward, Wolfe2010CombinedTA, Karlsson2012CombiningTA} extend this approach for manipulation. Instead of using hierarchies, Srivastava et al. \cite{srivastava2014combined} is a plan-first framework that uses constraints defined in PDDL to prune infeasible plans. We also use constraints and a plan-first approach, but unlike this prior work, our method is applicable outside of manipulation and does not assume the absence of reachable dead-end states. We accomplish this by using streams as an abstraction for motion planning while also using queue-based algorithms that revisit previous states, relaxing the dead-end state assumption. 
Our focus lies in increasing the speed of universally applicable TAMP algorithms, agnostic to the motion planning implementation. To achieve this, we advance the integration of classical task planning \cite{STRIPS} in PDDL \cite{pddl} and stream-based motion planning \cite{pddlstream}.

The motion planning component of TAMP commonly consists of finding valid robot trajectories for manipulating objects while avoiding collisions in addition to finding satisfiable geometric assignments such as grasps and poses \cite{motion}. Optimization-based approaches~\cite{LGP, migimatsu2020object} attempt to find optimal motion plans with nonlinear optimization, which can be sensitive to initial conditions and prone to failure. Thomason et al. propose TMIT* \cite{tmit} which plans in a hybrid symbolic and continuous state space, using CSP constraints and asymmetric forward and reverse motion planning. Our framework provides a connection from constraint-based planning to streams, which could be extended to an approach like TMIT*. 
% Motion planning can alternatively be cast as a constraint satisfaction problem~\cite{lozano2014constraint}, but it requires discretizing the continuous spaces of motion planning. 
% Since modern task planners are relatively fast, a common approach to TAMP is to take the sequence of symbolic actions proposed by the task planner and construct a long-horizon motion planning problem. This approach is also called \textit{sequencing-first}~\cite{garrett2021integrated}. 
%Many works use probabilistically complete sampling-based approaches, where each symbolic action is treated as a motion planning sub-problem, and various backtracking and pruning techniques are used to refine the motion plan~\cite{idtmp, cambon2009hybrid, plaku2010sampling, lagriffoul2014efficiently, zhang2016co, bidot2017geometric}. 
% Although sampling-based methods are fast, they are only shown to be probabilistically complete \cite{rrt}.
%The input to motion planning is a symbolic plan from the task planner which contains the actions and objects to act on. Each action is treated as its own motion planning sub-problem and to solve them many works use either sampling methods like {\em Rapidly-Exploring Random Trees\/} (RRT) or optimization-based approaches \cite{LGP}. IDTMP is one work which leverages sampling-based planning to achieve probabilistic completeness \cite{idtmp}. 

In contrast to refining each symbolic action sequentially, recent works propose to break down each motion planning sub-problem into even smaller, reusable functions that enable more efficient motion planning across actions. 
% {\em Hierarchical Task and Motion Planning in the Now\/} (HTMP)~\cite{htmp} introduced \textit{suggestors}, which are lightweight geometric functions that are integrated into the preconditions of actions \cite{htmp}.
PDDLStream~\cite{pddlstream} proposes a general TAMP framework that formalizes these as \textit{streams}. Decomposing the motion planning problem into these lightweight samplers gives rise to efficient sampling algorithms that can intelligently resample streams until a task plan is refined. However, incorporating streams directly into task planning causes the PDDL problem to exponentially grow in the number of objects, which slows down task planning. We propose a stream-based TAMP algorithm that benefits from efficient motion planning while keeping task planning light.

There has been a rise in learning-based methods that seek to overcome the disadvantages of classical TAMP. Driess et al. \cite{dvr} trains a policy to solve TAMP tasks from images and demonstrations produced by classical TAMP solvers. Other works augment classical TAMP solvers by accelerating planning with learned heuristics~\cite{silver2021planning, wang2021learning, learnedheuristic} or giving them the ability to handle uncertainty~\cite{m0m, garrett2020online}. Our work can benefit these methods by speeding up solving times and increasing the scale of possible tasks.

% Another work models the TAMP problem as a {\em Markov Decision Process\/} (MDP) and applies reinforcement learning methods \cite{learnedheuristic}. These works are inherently domain-dependent because they rely on gathering data on specific environments. Our approach is domain-agnostic. It can benefit learning-based TAMP algorithms that leverage traditional TAMP solvers \cite{silver2021planning, wang2021learning, m0m} or require training data generated by TAMP solvers \cite{dvr}.

PDDLStream \cite{pddlstream} and IDTMP \cite{idtmp} are the two works most closely related to ours. We build off of PDDLStream's stream framework for motion planning and propose a constrained task-planning method similar to that of IDTMP.

\section{Background}
\label{sec:background}
\subsection{Planning Domain Description Language}

Our framework uses the {\em Planning Domain Definition Language\/} (PDDL) \cite{pddl} for task planning. A PDDL domain, typically defined as a \texttt{domain.pddl} file, can be described as a tuple $(\Phi, \mathcal{A})$, where $\Phi$ is the set of predicates (binary-valued properties of objects) and $\mathcal{A}$ is the set of actions. A PDDL problem, typically defined as a \texttt{problem.pddl} file, is a tuple $(\mathcal{O}, s_0, g)$, where $\mathcal{O}$ is the set of environment objects, $s_0$ is the initial state, and $g$ is the goal formula to be satisfied. The task planner's role is to find a sequence of actions, or a task plan $\pi$, that will turn the initial state $s_0$ into a new state that satisfies the goal formula $g$.

Actions are defined by their preconditions---a logic formula that must be satisfied by the current state in order to execute the action---and effects---a formula that describes how the state changes upon executing the action. The following are PDDL definitions of \pddlaction{Pick} and \pddlaction{Place} actions for a simple pick-and-place domain that we will use as a running example in this paper.
\begin{lstlisting}[label=code:pddl]
(:action Pick
  :parameters (?o - obj ?r - region)
  :precondition (and
    (on ?o ?r)
    (handempty))
  :effect (and
    (not (on ?o ?r))
    (holding ?o)
    (not (handempty))))
(:action Place
  :parameters (?o - obj ?r - region)
  :precondition (holding ?o)
  :effect (and 
    (on ?o ?r)
    (not (holding ?o))
    (handempty)))
\end{lstlisting}

\subsection{Streams}

Our framework uses streams from PDDLStream \cite{pddlstream} to perform sampling-based motion planning. Streams decompose the long-horizon motion planning problem into unit sampling functions that each address a small component of the motion planning problem. 

%This decomposition allows us to sample streams that are more likely to fail first, so that if a stream fails, we can terminate motion planning earlier and avoid wasting computation time on sampling other components of the infeasible problem.
A stream is defined as a generator function $\sigma(x_1, \dots, x_n) \rightarrow (y_1, \dots y_m)$ which takes a tuple $x_1, \dots, x_n$ as input and generates a tuple of outputs $(y_1, \dots, y_m)$. For example, the \pddl{sample-pose} stream may take in an object $o$ and region $r$ and output a placement pose $p$ for $o$ in $r$:
\begin{lstlisting}
(:stream sample-pose
  :inputs (?o - obj ?r - region)
  :outputs (?p - pose))
\end{lstlisting}
Streams are typically defined in a \texttt{streams.pddl} file and are accompanied by user-provided Python functions which implement the actual sampling procedure. In this paper, we refer to the set of streams in the PDDLStream domain as $\Sigma$. We extend streams by adding a cache and a probability to return a cached result with a probability from $[0, 1)$. We only use this feature for the \textit{Rover} domain and ablate it in Sec.~\ref{sec:experiments}.

A stream \textit{instance} is a stream instantiated with concrete inputs and outputs, such as \pddl{sample-pose(apple, shelf)}$\rightarrow$\pddl{p1}. Outputs of stream instances are called stream objects, and the inputs to stream instances can either be stream objects or PDDL objects. In this case, \pddl{apple} and \pddl{shelf} are both PDDL objects, while \pddl{p1} is a stream object. PDDL objects are defined in the PDDL problem as object set $\mathcal{O}$, but stream objects do not exist at the beginning of task planning and need to be created during planning.

Every time a stream instance is called, it may generate new values for the output stream objects. For example, \pddl{sample-pose} might generate poses for \pddl{p1} where the positions are sampled randomly from the support area of \pddl{shelf}. A stream instance also returns a ``certified fact", or a symbolic proposition, along with the sampled values to certify the success of sampling. For example, if \pddl{sample-pose} succeeds, it would output the certified fact \pddl{(sample-pose apple shelf p1)}, which indicates that \pddl{p1} is a valid pose for \pddl{apple} on \pddl{shelf}. However, if sampling fails---for example, because \pddl{shelf} is too small to support a placement pose for \pddl{apple}---then the certified fact is not returned.
%and its level is increased, so the next iteration of stream generation avoids adding the same certified fact.

Note that our definition of certified facts is slightly different from that of PDDLStream; while stream instances in PDDLStream may output multiple certified facts, we require that stream instances output a single certified fact containing all of the input and output objects so that there is a bijective mapping between stream instances and certified facts. This is not a restriction because non-bijective certified facts are instead added to the geometric postconditions of our action. The bijective mapping allows our stream planner to directly determine the required stream instances from a set of desired certified facts. PDDLStream, on the other hand, must blindly create stream instances until all desired certified facts are covered, which is an expensive iterative process and often results in the creation of many unused certified facts in the task state.

A stream plan $\psi$ is a sequence of stream instances that each must be sampled successfully to complete the motion planning problem for a candidate task plan $\pi$. After computing a stream plan for a candidate action skeleton, the last step is to sample the streams to generate a motion plan. If a stream fails, we resample the streams until the entire stream plan is successful. We also need to decide when to give up motion planning for a candidate action skeleton and mark it as infeasible. In our experiments, we use the semi-complete \texttt{Adaptive} PDDLStream algorithm to handle stream plan sampling and termination. We refer readers to \cite{pddlstream} for an in-depth description of this algorithm.

% The key difference between PDDLStream and our method is the way in which stream objects and certified facts are treated. PDDLStream treats stream objects as PDDL objects and allows certified facts to be used in the preconditions of PDDL actions. While this enables a tight integration between task planning and motion planning, it is also computationally expensive. Our method eliminates this performance bottleneck by keeping stream objects and certified facts separate from PDDL and handling them in a later stream planning process (Sec.~\ref{sec:gpddl-stream-planning}). The integration between task and motion planning is achieved with PDDL constraints (Sec.~\ref{sec:gpddl-task-planning}) rather than deferring the stream instance to a later level like in PDDLStream.
The key difference between PDDLStream and our method is how stream objects and certified facts are treated. PDDLStream treats stream objects as PDDL objects and allows certified facts to be used in the preconditions of PDDL actions. The main disadvantage of this approach is that the task planner is now required to decide what stream objects to use for a task plan. It is not known a priori what stream objects are required to satisfy the task planning goal. Therefore, stream generation in PDDLStream is an iterative process where stream objects are incrementally introduced by level to the PDDL problem until the task planner succeeds. At first, when no stream objects are available, the task planner will certainly fail. As the number of stream objects grows, task planning quickly becomes intractable due to its PSPACE-hard complexity. Task planning is therefore a significant bottleneck in PDDLStream when many stream objects are required, which may happen for problems that have many movable objects and require long task plans. Our key insight is that deciding what stream objects to use for a task plan can be done with a simple stream planning procedure  (Sec.~\ref{sec:gpddl-stream-planning}) that does not require a PDDL solver. The integration between task and motion planning is achieved with PDDL constraints (Sec.~\ref{sec:gpddl-task-planning}) rather than deferring the stream instance to a later level like in PDDLStream.

\section{COAST Algorithm}
\label{sec:methods}
The PDDL domain $(\Phi, \mathcal{A})$ and problem $(\mathcal{O}, s_0, g)$ given to COAST are defined with vanilla PDDL (i.e. no stream objects), with actions resembling the example definitions of \pddlaction{Pick} and \pddlaction{Place} in Sec.~\ref{code:pddl}. To connect the PDDL domain with streams $\Sigma$, we introduce a stream planning layer that plans using geometric actions $\mathcal{A}_{\text{geom}}$ and the initial geometric state $s_{0_\text{geom}}$, explained in more detail in Sec.~\ref{sec:gpddl-stream-planning}.
% The first step of planning is to automatically augment the given PDDL with timestamps to provide a mechanism for motion planning to communicate failures to the task planner (Line~\ref{line:add-timestamps}). 
Then, COAST enters a loop that alternates between task and motion planning. An overview of our algorithm is provided in Alg.~\ref{alg:gpddl}.
We use an off-the-shelf PDDL solver to propose a candidate task plan $\pi$ that satisfies the symbolic goal but is not necessarily valid from a motion planning standpoint (Line~\ref{line:task-plan}). We then run a novel stream planning method to ground the task plan $\pi$ with stream objects to produce a geometric task plan $\pi_{\text{geom}}$ (Line~\ref{line:stream-plan}). For example, if $\pi$ is the task plan [\pddl{Pick(apple, table)}, \pddl{Place(apple, rack)}], then $\pi_{\text{geom}}$ might look like [\pddlaction{Pick(apple, table; g1)}, \pddlaction{Place(apple, rack; g1, p1)}], where \pddlaction{Pick} and \pddlaction{Place} are grounded with stream objects necessary for motion planning, such as grasp \pddl{g1} and pose \pddl{p1}. Stream planning also produces a stream plan $\psi$, which is a sequence of stream instances that need to be sampled to produce values for the stream objects in $\pi_{\text{geom}}$. We then use PDDLStream's \texttt{Adaptive} algorithm to sample the stream plan and return a binding map $Y$ from the stream outputs $y$ to their sampled values along with the set of certified facts $s_\psi$ (Line~\ref{line:adaptive}). If there is one certified fact per stream instance in $\psi$, then the entire stream plan was sampled successfully and we can terminate planning. Otherwise, we constrain the PDDL problem by modifying the initial state $s_0$ and action definitions in $\mathcal{A}$ to force the PDDL solver to find an alternative plan (Line~\ref{line:constrain-pddl}). Then the planning cycle continues until we successfully complete motion planning for a task plan or time out. We maintain a queue of all previous planning states, and we sort the queue by the frequency of the set of constraints and the number of constraints to prioritize more unique and less constrained task states. We prove our algorithm is probabilistically complete in Sec. \ref{sec:prob-proof}. The following subsections describe our stream planning procedure and task planning constraints in more detail.

%First, our algorithm creates a mapping between the name and failed actions of the sampler. At the beginning, there are zero constraints, so the task planner can plan immediately. After querying the task planner for a plan, we retrace the order of samplers that need to be called. We do this by recursively instantiating samplers and their dependent samplers until all the samplers have been instantiated. With this sampler plan, we can call our sampling algorithm which will either end in success or we will have some constraints to replan the task planner. We replan the task planner and repeat the retrace and sampling algorithm until the sampling algorithm returns a successful binding. Once we have a successful binding, we have all the information we need to rebuild what commands to send to the robot for execution.

\begin{algorithm}[tb]
    \caption{COAST TAMP Algorithm Overview}
    \begin{algorithmic}[1]
        \Function{COAST}{$\Phi, \mathcal{A}, \mathcal{A}_{\text{geom}}, \Sigma, \mathcal{O}, s_0, s_{0_{\text{geom}}}, g$}
            % \State $\Phi, \mathcal{A}, \mathcal{O}, s_0 \gets \Call{AddPddlTimestamps}{\Phi, \mathcal{A}, \mathcal{O}, s_0}$\label{line:add-timestamps}
            \State $\mathcal{Q} \gets []$ 
            \State $\Call {PUSH}{\mathcal{Q}, \Phi, \mathcal{A}, \mathcal{O}, s_0, g}$
            \While{not \Call{Timeout}{}}
                \State $\Call {POP}{\mathcal{Q}, \Phi, \mathcal{A}, \mathcal{O}, s_0, g}$
                \State $\pi \gets \Call{TaskPlan}{\Phi, \mathcal{A}, \mathcal{O}, s_0, g}$ \label{line:task-plan}
                % \If{$\pi = \text{None}$ \textbf{and} $Q \neq []$}
                %     \State $\Phi, \mathcal{A}, \mathcal{O}, s_0, g \gets \Call{Pop}{\mathcal{Q}}$
                % \EndIf
                \If{$\pi = \text{None}$ \textbf{and} $Q = []$}
                    \State \Return \text{None}
                \EndIf

                \State $\pi_{\text{geom}}, \psi \gets \Call{StreamPlan}{\mathcal{A}_{\text{geom}}, \Sigma, s_{0_{\text{geom}}}, \pi}$\label{line:stream-plan}
                \State $Y, s_\psi \gets \Call{AdaptiveBinding}{\psi}$\label{line:adaptive}
                
                \If{\Call{IsSuccessful}{$\psi, s_\psi$}}
                    \Return $\pi_{\text{geom}}, Y$
                \EndIf
                \State $\Call {PUSH}{\mathcal{Q}, \Phi, \mathcal{A}, \mathcal{O}, s_0, g}$
                \State $s_0, \mathcal{A} \gets 
                \Call{ConstrainPDDL}{\mathcal{A}, \psi, s_\psi, s_0}$\label{line:constrain-pddl}
                \State $\Call {PUSH}{\mathcal{Q}, \Phi, \mathcal{A}, \mathcal{O}, s_0, g}$
            \EndWhile
        \EndFunction
    \end{algorithmic}
    \label{alg:gpddl}
\end{algorithm}
% \begin{algorithm}[tb]
%     \caption{G-PDDL TAMP Algorithm Overview}
%     \begin{algorithmic}[1]
%         \Function{G-PDDL}{$\Phi, \mathcal{A}, \mathcal{A}_{\text{geom}}, \Sigma, \mathcal{O}, s_0, s_{0_{\text{geom}}}, g$}
%             \State $\Phi, \mathcal{A}, \mathcal{O}, s_0 \gets \Call{AddPddlTimestamps}{\Phi, \mathcal{A}, \mathcal{O}, s_0}$\label{line:add-timestamps}
%             \While{not \Call{Timeout}{}}
%                 \State $\pi \gets \Call{TaskPlan}{\Phi, \mathcal{A}, \mathcal{O}, s_0, g}$\label{line:task-plan}
%                 \If{$\pi = \text{None}$}
%                     \Return None
%                 \EndIf
%                 \State $\pi_{\text{geom}}, \psi \gets \Call{StreamPlan}{\mathcal{A}_{\text{geom}}, \Sigma, s_{0_{\text{geom}}}, \pi}$\label{line:stream-plan}
%                 \State $Y, s_\psi \gets \Call{AdaptiveBinding}{\psi}$\label{line:adaptive}
%                 \If{\Call{IsSuccessful}{$\psi, s_\psi$}}
%                     \Return $\pi_{\text{geom}}, Y$
%                 \EndIf
%                 \State $s_0, \mathcal{A} \gets 
%                 \Call{ConstrainPDDL}{\mathcal{A}, \psi, s_\psi, s_0}$\label{line:constrain-pddl}
%             \EndWhile
%         \EndFunction
%     \end{algorithmic}
%     \label{alg:gpddl}
% \end{algorithm}

\subsection{COAST Stream Planning}
\label{sec:gpddl-stream-planning}

\begin{algorithm}[tb]
    \caption{COAST Stream Planning}
    \begin{algorithmic}[1]
        \Function{StreamPlan}{$\mathcal{A}_{\text{geom}}, \Sigma, s_{0_{\text{geom}}}, \pi$}
            \State $\psi \gets \varnothing$
            \For{$t = 1 \dots H$}
                \State $a_t \gets \pi[t]$
                \State $a_{t_{\text{geom}}} \gets \Call{GetGeomAction}{\mathcal{A}_{\text{geom}}, a_t}$
                \State $\overline{a}_{t_{\text{geom}}} \gets \Call{GroundGeomAction}{a_{t_{\text{geom}}}, s_{t-1_{\text{geom}}}}$
                \State $\psi \gets \psi \cup \Call{GetPreconditionStreams}{\overline{a}_{t_{geom}}}$
                \State $s_{t_{\text{geom}}} \gets \Call{ApplyGeomAction}{s_{t-1_{\text{geom}}}, \overline{a}_{t_{\text{geom}}}}$
            \EndFor
            \State \Return $\psi$
        \EndFunction
        % \Function{SortStreams}{$\psi, \psi_{\text{visited}}$}
        %     \If{$\Call{Length}{\psi} = 0$} \Return []
        %     \EndIf
        %     \State $\sigma \gets \Call{SelectStreamWithMostInputs}{\psi}$
        %     \State $\psi_{\text{visited}} \gets \psi_{\text{visited}} \cup \{\sigma\}$
        %     \State $\psi_0 \gets \Call{GetAncestors}{\psi, \sigma} - \psi_{\text{visited}}$
        %     \State $\psi_1 \gets \psi - \psi_0 - \psi_{\text{visited}}$
        %     \State \Return $\Call{SortStreams}{\psi_0, \psi_{\text{visited}}} + [\sigma] + \Call{SortStreams}{\psi_1, \psi_{\text{visited}}}$
        % \EndFunction
    \end{algorithmic}
    \label{alg:streamplan}
\end{algorithm}

For every candidate task plan produced by the task planner, we need to perform motion planning to produce a trajectory for the robot to execute the task plan, or if motion planning fails, mark the task plan as infeasible. We define the motion planning problem as finding satisfiable assignments to stream objects in a given stream plan. Our method uses a novel stream planning subroutine to generate a stream plan from a candidate task plan. The pseudocode for this subroutine is provided in Alg.~\ref{alg:streamplan}.

Each action is associated with a set of streams that need to be executed during motion planning. We specify how these streams are executed for each action in a separate \texttt{geometric.pddl} file. For example, we may define the geometric \pddl{Place} action as:
\begin{lstlisting}
(:geom-action Place
  :parameters (?o - obj ?r - region)
  :inputs (?g - grasp)
  :outputs (?p - pose)
  :geom-precondition (and
    (in-grasp ?o ?g)
    (sample-pose ?o ?r ?p))
  :geom-effect (and
    (not (in-grasp ?o ?g))
    (at-pose ?o ?p)))
\end{lstlisting}

The \pddl{:parameters} field defines the PDDL object parameters for this action; it should be equivalent to \pddl{:parameters} defined for the corresponding PDDL action in \texttt{domain.pddl}. The \pddl{:inputs} and \pddl{:outputs} fields define the input and output stream objects for this action. This \pddlaction{Place} action, for example, takes as input a grasp \pddl{?g} and outputs a pose \pddl{?p}. While the \pddl{:parameters} will be determined by the task planner (e.g. \pddlaction{Pick(apple, table)}), the stream objects need to be determined during the stream planning phase.

During stream planning, we maintain a geometric state, which, similarly to the symbolic state in PDDL, is represented with a set of geometric propositions. While symbolic propositions like \pddl{(on apple table)} are defined with symbolic objects, geometric propositions can also be defined with stream objects, such as \pddl{(at-pose apple p1)}. The \pddl{:geom-precondition} field defines the requirements for applying a geometric action to a geometric state, and the \pddl{:geom-effect} field specifies how the geometric state changes upon executing each action---just like the preconditions and effects of symbolic actions in PDDL.

The job of stream planning is two-fold: 1) grounding each action in a given task plan with stream objects, and 2) computing a stream plan, or an ordered sequence of stream instances from the grounded actions.

\subsubsection{Grounding geometric actions with stream objects}

The \pddl{:inputs} are determined by using the \pddl{:geom-precondition} field to find matching stream objects in a geometric state at a specific step in the plan. For example, a precondition for \pddlaction{Place(apple, rack)} is \pddl{(in-grasp apple ?g)}, where \pddl{?g} is an undetermined stream object defined in the \pddl{:inputs} field. The geometric state is a set of geometric propositions. If the geometric state at the beginning of \pddlaction{Place(apple, rack)} is \{\pddl{(at-pose orange p1)}, \pddl{(in-grasp apple g1)}\}, then from the precondition \pddl{(in-grasp apple ?g)}, we can infer that \pddl{g1} is a valid argument for the input parameter \pddl{?g}. The \pddl{:outputs} are generated by the actions, so the stream planner will simply define new stream objects for each action call. For example, the stream planner may define a stream object \pddl{p1} as the output of \pddlaction{Place(apple, rack)}. The action call \pddlaction{Place(apple, rack)} is now grounded with concrete stream objects \pddl{g1} and \pddl{p1} (these stream objects will not be assigned values until the stream sampling stage). We will represent this grounded action call as \pddlaction{Place(apple, rack; g1, p1)}.

\subsubsection{Computing a stream plan from grounded actions}

Another geometric precondition of \pddlaction{Place(apple, rack)} is the certified fact \pddl{(sample-pose apple rack ?p)}. When the geometric action is fully grounded with stream objects (e.g. \pddlaction{Place(apple, rack; g1, p1)}), then the certified facts in its preconditions can be mapped to stream instances. For example, the \pddl{sample-pose} precondition becomes \pddl{(sample-pose apple rack p1)}, which corresponds to the stream instance \pddl{sample-pose(apple, rack)}$ \rightarrow $\pddl{p1}. This precondition indicates that the successful sampling of the stream instance \pddl{sample-pose(apple, rack)}$ \rightarrow $\pddl{p1} is required for the execution of the geometric action \pddlaction{Place(apple, rack; g1, p1)}, and thus this stream instance is added to the stream plan $\psi$.

Note that during the stream planning phase, the planned stream instances are not actually sampled. The evaluation of stream instances are deferred to the stream sampling phase (e.g. PDDLStream's \texttt{Adaptive} algorithm). During stream sampling, if the stream instance \pddl{sample-pose(apple, rack)}$ \rightarrow $\pddl{p1} produces a successful sample, then the certified fact \pddl{(sample-pose apple rack p1)} gets returned, and the corresponding precondition for the geometric action \pddlaction{Place(apple, rack; g1, p1)} is satisfied. Otherwise, stream sampling continues until timeout, and the task planning domain will be updated with a constraint from the most recent sampling failure.

\subsection{COAST Constraints}
\label{sec:gpddl-task-planning}
Our approach relies on a constraint-based feedback model where motion planning failures during plan refinement are converted to logical constraints embedded in the task planner. We observed little to no slowdown in the task-planning time from adding these constraints to the planner. We provide generalized sequence, action, and collision constraints. %Note that only timestep constraints require augmenting the domain with timesteps.

\subsubsection{Sequence Constraint}
If the task plan [\pddlaction{Pick(apple, table)}, \pddlaction{Place(apple, rack}] fails because \pddlaction{Place(apple, rack)} is infeasible, then we need the task planner to produce an alternative task plan where \pddlaction{Place(apple, rack)} is not attempted directly after the same preceding sequence of actions [\pddlaction{Pick(apple, table)}]. This is a general constraint that requires automatically augmenting \texttt{domain.pddl} with timestamps in order to be compatible with off-the-shelf PDDL solvers. The implementation of this constraint is described in detail in the supplementary material on our webpage.

%Each action is automatically augmented with a precondition that the action has not failed before, e.g., \pddl{(not (fail-pick apple table t0 t1))}. If the symbolic state contains \pddl{(fail-pick apple table t0 t1)} during task planning, then this indicates that stream sampling for the action \pddlaction{Pick(apple, table, t0, t1)} in a previous TAMP iteration was unsuccessful and therefore should not be considered for task planning again.

%This is the most general constraint, where we require timesteps added to the domain and we want to constrain the task planner to not produce the same sequence of actions again.
%\label{appx:timestep}

\subsubsection{Action Constraint}
It may be appropriate to prevent an action being executed with the same arguments, regardless of the sequence of actions preceding it. We can accomplish this by automatically augmenting every action in \texttt{domain.pddl} with the precondition that it has not failed before. Below, we show the augmented definitions of \pddlaction{Pick} and \pddlaction{Place}, where the original preconditions from the definitions in Sec.~\ref{code:pddl} are replaced with {\color{mygreen} \pddl{; ...same as original}} for brevity.
\begin{lstlisting}[label=code:augmentedpddl]
(:action Pick
  :parameters (?o - obj ?r - region)
  :precondition (and
    ; ...same as original
    (not (fail-pick ?o ?r))))
(:action Place
  :parameters (?o - obj ?r - region)
  :precondition (and
    ; ...same as original
    (not (fail-place ?o ?r))))
\end{lstlisting}

Suppose we have the task plan [\pddlaction{Pick(apple, table)}, \pddlaction{Place(apple, rack)}]. If motion planning for this task plan fails on the first action \pddlaction{Pick(apple, table)}, then we can prevent the task planner from proposing this action again by adding the \pddl(fail-pick apple table) proposition to the initial state $s_0$ in the PDDL problem. Since we maintain a queue of all previous planning states and their constraints, we will eventually revisit previous plans and maintain probabilistic completeness (Sec. \ref{sec:prob-proof}). Note that this constraint is analogous to the failure-generalization constraint in IDTMP \cite{idtmp}.

\subsubsection{Collision Constraint}
Sequence and action constraints can be applied to any domain, but for manipulation domains, we may often want a stronger constraint for handling movable obstructions. IDTMP~\cite{idtmp} proposes a location-based constraint for the \textit{Blocks} domain, where if picking or placing a particular block at a particular location is identified to cause a collision, then \textit{all} blocks will be prohibited from being picked or placed at the same location. Similar to IDTMP, this constraint relies on a discrete set of pre-defined locations. We show an abbreviated action definition below for the \textit{Block} domain.
\begin{lstlisting}[label=code:constraintpddl]
(:action Pick
  :parameters (?b - block ?l - loc)
  :precondition (and 
    ; ...
    (clear ?b))
  :effect (and 
    ; ...
    (not (clear ?b)) (clear ?l)))
(:action Place
  :parameters (?b - block ?l - loc)
  :precondition (and 
    ; ... 
    (clear ?l))
  :effect (and 
    ; ... 
    (clear ?b) (not (clear ?l)))
)
\end{lstlisting}
We formulate this constraint as a logical implication and append the implication to the effect of the action that occurs before the failure.

% This constraint is more restrictive than the timestamp constraint, which prevents a \textit{specific} block from being picked or placed at that location at a given timestep. Similarly, PDDLStream \cite{pddlstream} uses derived predicates to produce \textit{safe} predicates where certain sets of stream objects produce collision-free trajectories with each other. 
% To see an example use case, see \appx\ref{appx:geometric-action}.

% In the IDTMP benchmark, there are a discrete number of poses that are represented symbolically in the task plan. The predicate \textit{clear(?locA)} keeps track of whether a certain pose is clear or not. IDTMP suggests modifying the failure constraint for this particular domain, since our domain knowledge can be applied to make the constraint more efficient. Consider an example where the robot places block $b1$ in a location that obstructs another block. Then the standard failure constraint would be to prevent that same block $b1$ from being placed in that location again. However IDTMP modifies this constraint to prevent \textit{any} block from being placed in that location. In our comparison with IDTMP, we apply this same domain knowledge to our framework, so that we can compare the computational performance of the two planning methods, rather than the efficacy of this domain-specific location constraint. 

Let $a$ be the action that failed and $l$ be the location of a collision. Our collision constraint is then the following: 
\[\neg \texttt{clear}(l) \implies \texttt{fail-}a \wedge \texttt{clear}(l) \implies \neg \texttt{fail-}a\]
This means that when an action fails, we will prune out any plan that has the same collision.
\section{Experiments}
\label{sec:experiments}
% In our experiments, we test the following hypotheses:
% \begin{description}
%     \item[H1] Our constraint framework using PDDL is comparable in runtime performance to IDTMP's CSP planner.
%     \item[H2] Separating the geometric objects from task planning improves the scalability and cumulative runtime performance of planning compared to PDDLStream.
% \end{description}

\begin{figure}
    \centering
    \includegraphics[width=\columnwidth]{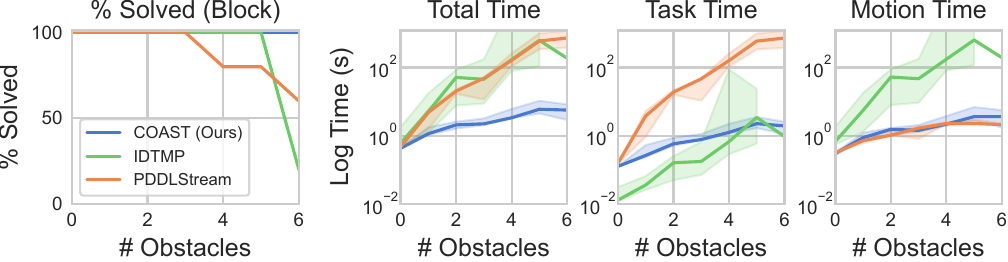}
    \caption{Percentage solved and cumulative task and motion planning times for the \textit{Blocks} domain with increasing number of obstacles.
    %We plot the median planning times (in log-scale) with 50\% confidence intervals. A failure is counted when a method fails to find a solution within 1200 sec. On the x-axis, we increase the number of obstacles, and on the y-axis, we plot the median planning time (in log-scale) separated by cumulative task and motion planning time. IDTMP performs similarly to PDDLStream in total time while our method is faster by an order of magnitude. 
    On the most complex configuration (6 obstacles), our algorithm achieves 100\% success while IDTMP achieves 20\% and PDDLStream achieves 60\%. The reported planning times include the failed trials that time out at 1200s. Our algorithm solves the largest problem two orders of magnitude faster than PDDLStream and IDTMP.
    %IDTMP has efficient constraint-based task planning but is slow to solve the long-horizon motion planning problem. PDDLStream improves motion planning performance by breaking down the motion planning problem into streams, but the introduction of streams makes task planning a bottleneck. Our algorithm benefits from the motion planning efficiency of streams while achieving similar task planning efficiency as IDTMP by introducing a separate stream planning subroutine that offloads the work of handling stream objects from the task planner. This stream planning process takes around 0.01s and is included in our recorded motion planning time.
    }
    \label{fig:blocks}
\end{figure}
\begin{figure}
    \centering
    \includegraphics[width=\columnwidth]{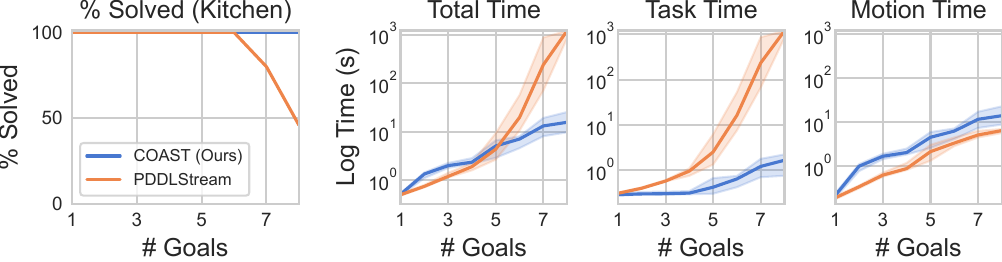}
        \caption{Percentage solved and cumulative task and motion planning times for the \textit{Kitchen} domain with increasing number of cook/clean goals. %On the x-axis, we increase the difficulty of the task by adding more goals, and we measure the median logarithmic cumulative planning times.
        % Since this task involves repetitive transfer of objects between surfaces, it requires long, chained stream plans where stream instance outputs are frequently inputs to future stream instances. PDDLStream's level-based stream generation significantly slows down planning times for longer task plans. 
        PDDLStream's planning process times out after 1200 seconds for 46\% of the most challenging tasks (8 cook/clean goals), whereas our method achieves 100\% success at magnitudes faster. PDDLStream's slow task planning times come from the explosive growth of its task state with stream objects, which our method avoids by introducing stream objects \textit{after} task planning.
        %PDDLStream has more than exponential growth in task planning time with a maximum of 1000s meanwhile our framework G-PDDL has sub-exponential growth with a maximum of 1s.
        % The motion planning times are comparable since we utilize the same stream sampling algorithm as PDDLStream.
        }
    \label{fig:kitchen}
\end{figure}
\begin{figure}
    \centering
    \includegraphics[width=\columnwidth]{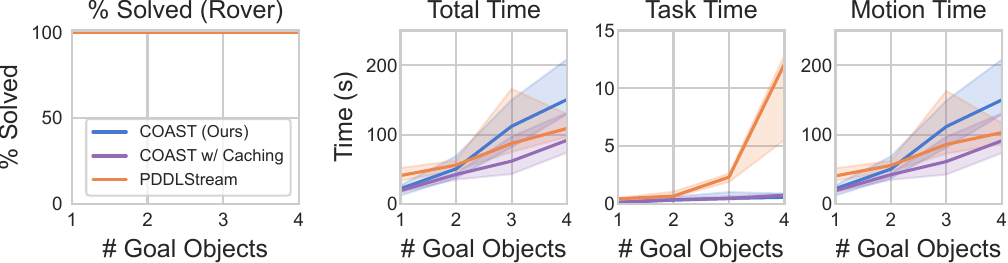}
        \caption{Percentage solved and cumulative task and motion planning times for the \textit{Rover} domain with increasing number of goal objects. We also include an ablation (purple) of our method with stream instance caching. PDDLStream has exponential growth in task planning time, whereas our task planning time remains nearly constant. %For four goal objects, PDDLStream spends 12 seconds task planning whereas, we spend less than a second task planning.
        % Without caching we achieve similar total planning time, but with caching we can achieve better total performance than PDDLStream.
        }
    \label{fig:rover}
   % In this task, the rovers interact with stationary rocks, objectives, and landers. As the number of objectives increases, the number of chained stream-instance sequences also increases, but the number of stream instances in each chained sequence remains low. This is facilitated by PDDLStream's ability to recycle stream outputs across iterations and within a single plan.
   
    % The total time planning time across methods seems to converge for N=4, but the success rate of PDDLStream plummets to 0.3 whereas our method has 0.7 success. This means most of the trials for PDDLStream for N = 4 would have taken more than our capped amount of 1500s to solve which explains the outliers in motion planning. 
\end{figure}
\paragraph{Experimental Domains, Metrics and Baselines}
We evaluate our approach in the three domains (Blocks~\cite{idtmp}, Kitchen~\cite{pddlstream} and Rover~\cite{pddlstream}) visualized in Fig.~\ref{fig:teaser}. For an in-depth explanation of these domains, we refer the reader to the supplementary material on the \href{https://branvu.github.io/coast.github.io}{webpage}.  
We compare the cumulative median task and motion planning times of our algorithm with a 50\% confidence interval to that of PDDLStream's \texttt{Adaptive} Algorithm and IDTMP. For the \textit{Blocks} domain, we compare PDDLStream, IDTMP, and our method on problems with 0--6 obstructing blocks, with 5 seeded trials for each. We run IDTMP and our method with collision constraints. We further compare our method to PDDLStream in the \textit{Kitchen} and \textit{Rover} domains. For \textit{Kitchen}, we increase the difficulty by incrementing the number of cook and clean goals from 1 to 8, each involving a variable number of the 4 items. We use our general timestep constraint and 50 seeded trials for each number of goals. For \textit{Rover}, we scale the number of objectives and rocks (goal objects) $N$ from 1 to 4, using our action constraint with 10 seeded trials. We institute a total planning timeout of 1200s on all domains and trials and report the percentage solved on each trial. Since \textit{Kitchen} and \textit{Rover} have large state spaces, we only compare them against PDDLStream because IDTMP requires a discretization of the object state space, which is not a scalable approach for these two domains. 

\paragraph{Results}
The results for \textit{Blocks} are shown in Fig.~\ref{fig:blocks}. Overall, IDTMP's cumulative task planning time with the CSP solver is comparable in magnitude to our PDDL planning with some differences attributed to FastDownward \cite{helmert2006fast} IO overhead. This demonstrates that our PDDL constraint framework is comparable in performance to IDTMP's CSP constraint framework.  

PDDLStream, which also uses Fast Downward, significantly slows down with the number of blocks. The \textit{Blocks} domain has many infeasible actions due to obstructing blocks, which is difficult for PDDLStream to handle. PDDLStream requires many stream objects and iterations of planning to support long and geometrically feasible actions. For the five obstacle case, PDDLStream and IDTMP have a median time of 600s, whereas our method takes 6s to plan. 
For IDTMP, motion planning is the bottleneck. To find the goal configuration for an action, IDTMP samples various collision-free goal configurations around a target object or location until a timeout or collision-free configuration is found.
In contrast, in PDDLStream and our method, we calculate the inverse kinematics solution for the grasp and approach pose and perform collision-checking for the trajectory between them. This streamlines the motion planning process significantly. We express this motion planning grounding with actions, streams, and stream objects in PDDLStream's \texttt{stream.pddl} and our \texttt{geometric.pddl}.

The results for \textit{Kitchen} are shown in Fig.~\ref{fig:kitchen}. The \textit{Kitchen} task involves repetitive transfer of objects between surfaces, which requires long, chained stream plans where stream instance outputs are frequently inputs to future stream instances. For PDDLStream, This requires many iterations of stream generation to produce high-level stream instances. 
This slows down task planning because many symbolic stream objects are added to the task state.
% PDDLStream's level-based optimistic stream generation and deferred-stream-based feedback mechanism significantly slow down planning times for longer task plans. While the cumulative motion planning times are similar, PDDLStream requires significantly more time for task planning. 
When there are 8 goals, PDDLStream takes a total planning time of 1000s, whereas our method takes 10s. 
% whereas PDDLStream can rely on the task planner to recycle stream objects when possible within a plan. With caching turned on, we can approximate the behavior of PDDLStream and obtain similar motion planning performance. 
For the \textit{Rover} domain in Fig. \ref{fig:rover}, PDDLStream spends the least amount of time on task planning compared to \textit{Block} and \textit{Kitchen}. This is specifically because in this domain PDDLStream's incremental stream generation algorithm can recycle rover positions across actions and iterations of planning. This requires fewer iterations of stream generation and therefore fewer stream objects in the task state, resulting in faster planning. Since we directly ground an action plan into a stream plan, every stream object will be unique, which prevents the recycling of stream objects. We circumvent this issue by introducing a caching extension to streams. As shown in the ablation in Fig~\ref{fig:rover}, without caching, our method is less efficient than PDDLStream during motion planning because it cannot recycle stream results across a plan. However, with caching stream instances that have PDDL objects as inputs, we achieve similar performance. With four goal objects, our task planning time with caching is 1s compared to PDDLStream's 10s; however, our total time is only marginally better since motion planning time dominates in this domain.
Overall, we show superior performance compared to PDDLStream and IDTMP on total planning time for \textit{Blocks} and \textit{Kitchen} and superior task planning time to PDDLStream on \textit{Rover}.    
\paragraph{Discussion} \label{sec:limitations}
Our method requires a new formulation of writing how streams and PDDL actions are combined, but we believe this formulation is more straightforward and as expressive as PDDLStream from the results on three different domains. With this new formulation, we can remove the optimistic stream generation process and directly map action plans to stream plans. This comes at a cost of not being able to rely on task planning to recycle stream outputs across different actions of a task plan, making refinement more inefficient per plan. However, this motion planning inefficiency is insignificant compared to the performance boost gained by our task planning approach.
A limitation of constraints are that they require manual engineering for each task; however, this can be a benefit since it gives a way for the user to directly embed domain knowledge to reduce the search space of TAMP. 
% and using stronger and a more intuitive constraint-based feedback mechanism, which reduces the number of iterations of motion planning, as highlighted in Fig.~\ref{fig:plans}.

\section{Conclusion}
\label{sec:conclusion}
We present COAST, a sampling-based TAMP algorithm that significantly outperforms previous state-of-the-art algorithms in terms of task planning time for a variety of domains. The key to faster planning is our novel stream planning subroutine, which bridges vanilla PDDL constraint-based task planning with stream-based motion planning and allows us to benefit from both. 
% A future direction from here includes considering non-deterministic outcomes of task planning to better handle uncertainties in the real world. 
\section{Probabilistic Completeness}
\label{sec:prob-proof}
\begin{theorem}
For feasible problems, as the number of samples approach infinity, the probability of success of COAST will approach 1.
\end{theorem}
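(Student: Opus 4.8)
The plan is to reduce the claim to two facts whose conjunction gives the result: (i) COAST proposes and attempts to sample a fixed feasible witness plan infinitely often, and (ii) each such attempt succeeds with a probability bounded below by a positive constant, so that over infinitely many attempts the cumulative success probability tends to $1$. First I would fix a witness: since the problem is feasible, there exists a finite task plan $\pi^*$ of some horizon $H^*$ whose grounded geometric plan $\pi^*_{\text{geom}}$ induces a stream plan $\psi^*$ that admits a satisfying assignment. Under the standard sampling assumption that each stream instance in $\psi^*$ returns a satisfying sample with positive probability, the \textsc{AdaptiveBinding} routine applied to $\psi^*$ returns all certified facts---and hence succeeds via \textsc{IsSuccessful}---in a single invocation with some probability $p^* > 0$.

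The core of the argument is establishing fact (i). I would show that the queue $\mathcal{Q}$ together with the constraint semantics of Sec.~\ref{sec:gpddl-task-planning} guarantees that a configuration yielding $\pi^*$ is popped infinitely often as the sample budget grows without bound. Two observations drive this. First, every failed iteration re-pushes the state it just popped \emph{before} calling \textsc{ConstrainPDDL} (Alg.~\ref{alg:gpddl}), so no planning state---in particular the fully unconstrained root---is ever permanently discarded. Second, the sequence and action constraints each prune only the single failing plan (or a single grounded action), so that finitely many failures remove only finitely many length-$\le H^*$ skeletons from any one branch; since there are finitely many such skeletons over a finite object set $\mathcal{O}$ and finite predicate set $\Phi$, and since the PDDL solver is complete, the search is forced to enumerate every feasible length-$\le H^*$ skeleton, $\pi^*$ among them. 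Because the queue is processed fairly---the sort by constraint frequency and count in Sec.~\ref{sec:methods} prevents any retained state from starving---a $\pi^*$-generating configuration is revisited infinitely often, and each visit triggers a fresh, independent \textsc{StreamPlan}/\textsc{AdaptiveBinding} attempt on $\psi^*$.

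Given facts (i) and (ii), I would close with an elementary concentration argument. Let $N$ denote the number of independent attempts at $\psi^*$ accumulated after a given number of samples; by fact (i), $N \to \infty$ as the number of samples $\to \infty$. Since the attempts draw samples independently and each succeeds with probability at least $p^*$, the probability that at least one succeeds is bounded below by $1 - (1 - p^*)^N$, which tends to $1$. As a single success returns a valid $(\pi_{\text{geom}}, Y)$ and terminates COAST successfully, the probability of success of COAST approaches $1$, which would establish the theorem.

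The main obstacle is fact (i): rigorously ruling out that the constraint feedback \emph{permanently} blocks the witness $\pi^*$ and that the queue never starves a $\pi^*$-generating state. The delicate point is that an action constraint such as \pddl{(fail-pick apple table)} can block $\pi^*$ within a constrained branch if $\pi^*$ reuses that action; the argument must therefore lean on the retained unconstrained ancestor states, showing that $\pi^*$ remains reachable from some live configuration at every stage, and must quantify the ``fairness'' of the priority queue precisely enough to guarantee infinitely many revisits. Bridging this discrete task-search accounting with the continuous per-attempt sampling bound is where the queue design and constraint semantics of Sec.~\ref{sec:methods} must be invoked most carefully.
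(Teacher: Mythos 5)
Your proposal takes essentially the same route as the paper's proof: completeness of the task planner guarantees the feasible plan is eventually proposed, the queue of retained (re-pushed) planning states guarantees unsuccessfully refined plans are revisited rather than permanently pruned by constraints, and probabilistic completeness of stream sampling (via \texttt{Adaptive}) drives the success probability to $1$. Your version is in fact more careful than the paper's --- the explicit $1-(1-p^*)^N$ bound and the fairness/starvation concern you flag for the priority queue are exactly the points the published proof asserts without argument ("we will eventually revisit previous plans"), so no gap exists relative to the paper's own standard of rigor.
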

\begin{proof}
Given a TAMP task formulated as described in Sec.~\ref{sec:methods}, where the given task planner is complete and streams are probabilistically-complete, let a feasible task plan be $\pi_f$ and a feasible refinement for $\pi_f$ be $Y_f$. Our algorithm queries the task planner and then calls the semi-complete \texttt{Adaptive} sampling algorithm \cite{pddlstream}. \texttt{Adaptive} will eventually find a feasible refinement if it exists. If no refinement is found, we backtrack to a previous state, and attempt refinement again, continuing to attempt all unsuccessfully refined task plans. Since task planning is complete, we are guaranteed to produce $\pi_f$ and since we eventually reattempt all unsuccessful refinements, our algorithm will eventually find the feasible refinement $Y_f$ to $\pi_f$ and our algorithm is probabilistically-complete. 
\end{proof}

%\section*{Acknowledgment}

%Toyota Research Institute (``TRI") provided funds to assist the authors with their research but this article solely reflects the opinions and conclusions of its authors and not TRI or any other Toyota entity.

{\footnotesize
\bibliographystyle{IEEEtranN}
\bibliography{references}
}

\ifx\arxiv\undefined
%%%%%% ICRA %%%%%%
\clearpage
%%%%%% ICRA %%%%%%
\else
%%%%%% arXiv %%%%%%
%%%%%% arXiv %%%%%%
\fi

\appendix
\subsection{In-Depth Comparison to Prior Works}
\label{appx:prior}

\subsubsection{PDDLStream}

\begin{algorithm}[tb]
    \caption{PDDLStream algorithm overview}
    \begin{algorithmic}[1]
        \Function{PDDLStream}{$\Phi, \mathcal{A}, \Sigma, \mathcal{O}, s_0, g$}
            \While{not \Call{Timeout}{}}
                \State $\mathcal{O} \gets \mathcal{O} \cup \Call{CreateStreams}{\Sigma, \mathcal{O}}$ \label{line:streamobjects}
                \State $\pi \gets \Call{TaskPlan}{\Phi, \mathcal{A}, \mathcal{O}, s_0, g}$
                \If{$\pi = \text{None}$}
                    \textbf{continue}
                \EndIf
                \State $\psi \gets \Call{RetraceStreams}{\Sigma, \pi}$
                
                \State $Y, s_\psi \gets \Call{AdaptiveBinding}{\psi}$
                \If{\Call{IsSuccessful}{$\psi, s_\psi$}}
                    \Return $\pi_{\text{geom}}, Y$
                \EndIf
                \State $s_0 \gets \Call{UpdateCertifiedFacts}{\psi, Y, s_\psi, s_0}$
            \EndWhile
        \EndFunction
    \end{algorithmic}
    \label{alg:pddlstream}
\end{algorithm}

Alg.~\ref{alg:pddlstream} provides an overview of the PDDLStream algorithm~\cite{pddlstream}. The key difference between PDDLStream and our method is that PDDLStream defines PDDL actions using stream objects as parameters and certified facts as preconditions. In the following example, the PDDL \pddl{Place} action takes in stream objects \pddl{?q1}, \pddl{?g}, \pddl{?p}, \pddl{?q2}, and \pddl{?t} as inputs, and includes certified facts \pddl{sample-pose}, \pddl{sample-ik}, and \pddl{check-collision} as preconditions.
\begin{lstlisting}
(:action Place
  :parameters (
    ?o - obj ?r - region ?q1 - conf ?g - grasp
    ?p - pose ?q2 - conf ?t - traj)
  :precondition (and
    (in-grasp ?o ?g)
    (at-conf ?q1)
    (sample-pose ?o ?r ?p)
  :effect (and
    (on ?o ?r)
    (handempty)
    (not (in-grasp ?o ?g))
    (at-pose ?o ?p)
    (not (at-conf ?q1))
    (at-conf ?q2)))
\end{lstlisting}
% (:action Pick
%   :parameters (
%   ?o - obj ?p - pose ?g - grasp ?q - conf ?t - traj
%   )
%   :precondition (and
%     (ontable ?o)
%     (forall (?other - object)
%       (not (holding ?other))
%     )
%     (kin ?o ?p ?g ?q ?t)
%     (at-pose ?o ?p)
%     (at-conf ?q)
%     (not (unsafe-approach ?o ?p ?g))
%     (not (unsafe-traj ?t))
%   )
%   :effect (and
%     (not (ontable ?o))
%     (holding ?o)
%     (at-grasp ?o ?g)
%     (not (at-pose ?o ?p))
%   )
% )
% This PDDLStream \pddl{:action} definition can be seen as a combination of G-PDDL's \pddl{:action} and \pddl{:geom-action} (e.g. G-PDDL's definitions of \pddlaction{Place} in Sec.~\ref{code:pddl},~\ref{code:gpddl}). 
The benefit of integrating certified facts into the preconditions of the PDDLStream \pddl{:action} is that there is no need to add constraints to the PDDL domain to force the task planner to produce alternative plans when motion planning is infeasible. When a stream instance fails, its level is increased, and it is not added to the task state in the next iteration and thus the preconditions for the corresponding action call will no longer be satisfied.

The main disadvantage of this approach is that the task planner is now required to decide what stream objects to use for a task plan. It is not known a priori what stream objects are required to satisfy the task planning goal. Therefore, task planning in PDDLStream is an iterative process where stream objects are incrementally introduced to the PDDL problem (Line~\ref{line:streamobjects}) until the task planner succeeds. At first, when no stream objects are available, the task planner will certainly fail. As the number of stream objects grows, task planning quickly becomes intractable due to its PSPACE-hard complexity. Task planning is therefore a significant bottleneck in PDDLStream when many stream objects are required, which may happen for problems that have many objects or that require long task plans.

Our key insight is that deciding what stream objects to use for a task plan can be done with a simple stream planning procedure that does not require a PDDL solver. COAST therefore alleviates the task planning bottleneck of PDDLStream by removing stream objects and certified facts from the task planning domain and handling them in a separate stream planning stage. Feedback from the motion planner is sent back to the task planner via PDDL constraints instead of certified facts.

\subsubsection{IDTMP}

IDTMP proposes a TAMP framework that, like our method, sends feedback from the motion planner to the task planner in the form of task planning constraints. However, IDTMP defines these constraints in a format that is only compatible with {\em Satisfiability Modulo Theory\/} (SMT) solvers. Our method, on the other hand, specifies constraints using vanilla PDDL and thus is compatible with both off-the-shelf PDDL solvers and SMT solvers. One distinct difference between IDTMP and our method is IDTMP incrementally expands the task horizon to fully constrain lower horizons before trying longer horizons. In our method, we do not have this restriction, and we can plan at any-length anytime.

% IDTMP also treats motion planning as a single monolithic problem, rather than breaking it down into streams. Motion planning is therefore a significant bottleneck for IDTMP.
\subsection{In-Depth Domain Descriptions}
\label{appx:in-depth-domains}
\subsubsection{Blocks}

This domain is adapted from IDTMP~\cite{idtmp}. A robot is given one red block and up to six blue obstructing blocks placed in random locations on a $3 \times 3$ grid. The goal is to perform a sequence of \pddlaction{Pick} and \pddlaction{Place} actions to move the red block into the center grid location without colliding with obstructing blocks, where the robot can only grasp blocks from the front. Since there are bound to be obstructions and limited solutions to clear them, this domain demonstrates high task-motion complexity, non-monotonicity, and infeasible actions. Many iterations of task and motion planning are required to solve problems in this domain. We use a domain-specific constraint similar to IDTMP's collision constraint and a collision-free stream like that of PDDLStream to ensure safe trajectories. 
% of locations with 1 red block and up to 6 blue obstructing blocks  with up to 6 blue blocks and 1 red block placed in feasible randomized locations on the grid. The goal is to move the red block into the center location without colliding using only a side grasp. There is always one blue block in the center obstructing the goal location. The symbolic actions are \pddl{Pick(?b, ?loc)} and \pddl{Place(?b, ?loc)} with a discrete set of nine possible locations \pddl{?loc} for each block \pddl{?b}. Collisions are detected by the motion planning module. Since there are bound to be obstructions and limited solutions to clear the obstructions, this domain demonstrates an example of high task-motion complexity and non-monotonicity. Many iterations of task planning and motion planning are required to solve this problem.

\subsubsection{Kitchen}
This domain is adapted from PDDLStream~\cite{pddlstream}. There are five available actions: \pddlaction{Pick}, \pddlaction{Place}, \pddlaction{Cook}, and \pddlaction{Clean}.
%In this task, the symbolic actions are \pddl{Pick}, \pddl{Place}, \pddl{Cook}, and \pddl{Clean}.
The scene includes a \pddl{table} with a \pddl{stove}, a \pddl{sink}, and four items that are initially placed at one of four positions on the \pddl{table}. The goal is to have certain items be \pddl{cooked} or \pddl{cleaned}.
%To satisfy this, the robot has to find a plan to place items on the stove and sink to apply the \pddl{Cook} or \pddl{Clean} actions while avoiding collisions.
The domain exhibits non-monotonicity because an item might have to be moved more than once in the plan. Since the sink and stove are limited in area, the robot has to plan a sequence of actions with a constraint on the number of items on each platform at a time. This domain also features non-geometric actions with the use of the \pddl{Cook} and \pddl{Clean} actions. We use the timestep constraint for not reproducing the same plan if a stream within an action fails.

\subsubsection{Rover}
We adapt the Rover domain from PDDLStream where there are two controllable rovers, a stationary lander, eight obstacles, and $N$ rocks and $N$ objectives \cite{pddlstream}. The goal of this domain is to establish a line-of-sight communication channel between the rovers and the lander to transmit visual data of objectives and samples. The challenge is that the arena is disconnected and the initial state does not include which rovers can reach specific rocks or objectives which means this domain has many infeasible actions. This domain has high task-motion complexity since many iterations of task to motion planning are required to discover feasible plans. We use a parameter constraint to enforce that failed streams should prevent an action from occurring again over all steps of a plan. We also experimented with caching stream instances during sampling with some probability parameter which we ablate in Fig. \ref{fig:rover}.
\subsection{Inverse Kinematics and Collision Checking}
\label{appx:geometric-action}

Below, we show a version of the geometric action \pddlaction{Place} defined in Sec.~\ref{sec:gpddl-stream-planning} that considers robot kinematics and collisions:
\begin{lstlisting}[label=code:gpddl]
(:geom-action Place
  :parameters (?o - obj ?r - region)
  :inputs (?q1 - conf ?g - grasp)
  :outputs (?p - pose ?q2 - conf ?t - traj)
  :geom-precondition (and
    (in-grasp ?o ?g)
    (at-conf ?q1)
    (sample-pose ?o ?r ?p)
    (sample-ik ?o ?p ?g ?q2 ?t)
    (forall ?oo - obj
      (forall ?pp - pose
        (when
          (and
            (at-pose ?oo ?pp)
            (not (= ?oo ?o)))
          (check-collision ?t ?oo ?pp)))))
  :geom-effect (and
    (not (in-grasp ?o ?g))
    (at-pose ?o ?p)
    (not (at-conf ?q1))
    (at-conf ?q2)))
\end{lstlisting}

This \pddlaction{Place} action outputs a target joint configuration \pddl{?q2} and a trajectory \pddl{?t} to perform the placing action via a \pddl{sample-ik} stream. Additionally, the geometric preconditions for this action require that this trajectory does not collide with any other object \pddl{?oo}, which is checked via the \pddl{check-collision} stream.

For comparison, we provide below PDDLStream's action definition which contains inverse-kinematics and collision-checking certified facts.
\begin{lstlisting}
(:action Place
  :parameters (
    ?o - obj ?r - region ?q1 - conf ?g - grasp
    ?p - pose ?q2 - conf ?t - traj)
  :precondition (and
    (in-grasp ?o ?g)
    (at-conf ?q1)
    (sample-pose ?o ?r ?p)
    (sample-ik ?o ?p ?g ?q2 ?t)
    (forall ?oo - obj
      (forall ?pp - pose
        (when
          (and
            (at-pose ?oo ?pp)
            (not (= ?oo ?o)))
          (check-collision ?t ?oo ?pp)))))
  :effect (and
    (on ?o ?r)
    (handempty)
    (not (in-grasp ?o ?g))
    (at-pose ?o ?p)
    (not (at-conf ?q1))
    (at-conf ?q2)))
\end{lstlisting}

\subsection{Sequence Constraints}
\label{appx:timestep}

Our method automatically augments PDDL action definitions with timestamps to allow the task planner to avoid proposing sequences of actions that have led to motion planning failures.
Specifically, two timestamp parameters, \pddl{?tprev} and \pddl{?t}, are added to each action, where \pddl{?tprev} is the timestamp before the action is executed and \pddl{?t} is the timestamp after, enforced by preconditions \pddl{(at-time ?tprev)} and \pddl{(next-time ?tprev ?t)}. Below, we show the augmented definitions of \pddl{Pick} and \pddl{Place}, where the original preconditions and effects from the definitions in Sec.~\ref{code:pddl} are replaced with {\color{mygreen} \pddl{; ...same as original}} for brevity.
\begin{lstlisting}
(:action Pick
  :parameters (
    ?o - obj ?r - region ?tprev - time ?t - time)
  :precondition (and
    ; ...same as original
    (at-time ?tprev)
    (next-time ?tprev ?t)
    (not (fail-pick ?o ?r ?tprev ?t)))
  :effect (and
    ; ...same as original
    (not (at-time ?tprev))
    (at-time ?t)
    (log-pick ?o ?r ?tprev ?t)))
(:action Place
  :parameters (
    ?o - obj ?r - region ?tprev - time ?t - time)
  :precondition (and
    ; ...same as original
    (at-time ?t1)
    (next-time ?tprev ?t)
    (not (fail-place ?o ?r ?tprev ?t)))
  :effect (and 
    ; ...same as original
    (not (at-time ?tprev))
    (at-time ?t)
    (log-place ?o ?r ?tprev ?t)))
\end{lstlisting}
Apart from timestamp boilerplate, each action is also automatically augmented with a symbolic effect that logs the execution of an action during task planning. For example, \pddl{(log-pick apple table t0 t1)} is added to the symbolic state after the execution of \pddlaction{Pick(apple, table, t0, t1)}. Put another way, if a symbolic state contains \pddl{(log-pick apple table t0 t1)} during task planning, then this indicates that \pddlaction{Pick(apple, table, t0, t1)} was executed by the task planner at timestamp \pddl{t1}.

Suppose the task planner proposes a simple four-step plan: [\pddlaction{Pick(apple, table, t0, t1)}, \pddlaction{Place(apple, rack, t1, t2)}, \pddlaction{Pick(banana, rack, t2, t3)}, \pddlaction{Place(banana, table, t3, t4)}]. If the motion planning for this action skeleton fails on the third action \pddlaction{Pick(banana, rack, t2, t3)}, then the PDDL domain will be updated to prevent the first three actions from being planned in sequence again. Specifically, an effect is added to the action preceding the failed action such that if \pddl{(log-pick apple table t0 t1)} has been added to the symbolic state and \pddlaction{Place(apple, rack, t1, t2)} is executed by the task planner, then \pddl{(fail-pick banana rack t2 t3)} is added to the symbolic state. Below, we show this added effect, with the original effects from the timestamp-augmented definition replaced with the comment {\color{mygreen} \pddl{; ...same as augmented}}.
\begin{lstlisting}
(:action Place
  :effect (and 
    ; ...same as augmented
    (when
      (and
        (log-pick apple table t0 t1)
        (= ?o apple)
        (= ?r rack)
        (= ?tprev t1)
        (= ?t t2))
      (fail-pick banana rack t2 t3))))
\end{lstlisting}
The added proposition \pddl{(fail-pick banana rack t2 t3)} prevents \pddlaction{Pick(banana, rack, t2, t3)} from being executed, since \pddl{(not (fail-pick banana rack t2 t3))} is a precondition of this action. In this case, the task planner will be forced to find a different sequence of actions to satisfy the goal.

It is important to note that \pddl{(fail-pick banana rack t2 t3)} can only be added to the state when the task planner is considering [\pddlaction{Pick(apple, table, t0, t1)}, \pddlaction{Place(apple, rack, t1, t2)}] as the first two actions in the task plan. Otherwise, this proposition may invalidate an unrelated action sequence such as [\pddlaction{Pick(orange, rack, t0, t1)}, \pddlaction{Place(orange, table, t1, t2)}, \pddlaction{Pick(banana, rack, t2, t3)}], which may have a chance of succeeding at motion planning.

To formalize this failure constraint, let us represent the $i$-th action in an action skeleton as $a_i(x_1^i, \dots, x_m^i)$, where $x_1^i, \dots, x_m^i$ are the $m$ parameters for action $a_i$ (e.g. \pddl{?o}, \pddl{?r}, \pddl{?tprev}, and \pddl{?t} in \pddlaction{Pick(?o, ?r, ?tprev, ?t)}.) Let $o_1^i, \dots, o_m^i$ be object assignments for these parameters, such that $a_i(o_1^i, \dots, o_m^i)$ represents an action call such as \pddlaction{Pick(apple, table, t0, t1)}. Let \pddl{log-}$a_i$ and \pddl{fail-}$a_i$ represent the logging and failure predicates corresponding to action $a_i$. If motion planning fails on action $t$, then we add the following effect to action $a_{t-1}$:
\begin{align}
    \begin{split}
        &\text{\pddl{log-}}a_1(o_1^1, \dots, o_m^1) \land \dots \land \text{\pddl{log-}}a_{t-2}(o_1^{t-2}, \dots, o_m^{t-2}) \\
        &\quad\land x_1^{t-1} = o_1^{t-1} \land \dots \land x_m^{t-1} = o_m^{t-1} \\
        &\quad\implies \text{\pddl{fail-}}a_t(o_1^t, \dots, o_m^t).
    \end{split} \label{eq:timestamp-constraint}
\end{align}
Note that for action $a_{t-1}$, \pddl{log-}$a_{t-1}(o_1^{t-1}, \dots, o_m^{t-1})$ has not yet been added to the symbolic state, so it is necessary to check that the input parameters $x_1^{t-1}, \dots, x_m^{t-1}$ are equivalent to the objects $o_1^{t-1}, \dots, o_m^{t-1}$ in the failed action sequence.

If stream sampling fails for the first action $a_1$, then $\text{\pddl{fail-}}a_1(o_1^t, \dots, o_m^t)$ is simply added to the initial state.
\subsection{Collision-Generalization Constraint}
To support collision-generalization constraints, we add an optional \pddl{:fail-effect} field to streams that specifies an effect that should be added to the effect of the parent PDDL action upon failing, similar to the conditional effect for timestamps in Eq.~\ref{eq:timestamp-constraint}. For example, consider the following \pddl{check-block-collision} stream, which checks whether a collision occurs with block \pddl{?b2} at grid location \pddl{?l2} while trying to execute trajectory \pddl{?t} with target location \pddl{?l1}.
\begin{lstlisting}
(:stream check-block-collision
  :inputs (?t - traj ?l1 - gridloc
    ?b2 - block ?l2 - gridloc)
  :fail-effect (and
    (when (not (clear ?l2))
      (blocked ?l1))
    (when (clear ?l2)
      (not (blocked ?l1)))))
\end{lstlisting}

Suppose the grounded task plan is [\pddlaction{Pick(b1, l9; q0, g1, q1, t1)}, \pddlaction{Place(b1, l1; q1, g1, q2, t2)}], and while stream sampling for \pddlaction{Place(b1, l1; q1 g1 q2 t2)}, the \pddl{check-block-collision(t2, l1, b2, l2)} stream instance fails (indicating a collision because of block \pddl{b2} at location \pddl{l2}). Then the specified \pddl{:fail-effect} gets added to the effects of the previous \pddl{Pick} action, where if \pddl{l2} is not clear, then \pddl{(blocked l1)} is added to the symbolic state. The converse effect is also added, where if \pddl{l2} becomes clear, then \pddl{(blocked l1)} is removed from the symbolic state.
% \begin{figure}
%     \centering
%     \includegraphics[width=\columnwidth]{}
%         \caption{
%             We plot the median time it takes to refine each successful task plan across all three domains.
%         }
%     \label{fig:ratios}
% \end{figure}

\end{document}